\def\draft{1}
\newcommand{\AuthorNote}[3]{{\color{#3} {\bf [[#1:}~#2{\bf]]}}}
\newcommand{\AuthorNote}[3]{}
\newtheorem{theorem}{Theorem}[section]
\newtheorem{lemma}[theorem]{Lemma}
\theoremstyle{definition}
\newtheorem{definition}[theorem]{Definition}
\newtheorem{problem}[theorem]{Problem}
\newcommand{\rmat}{[F^{-1}~~I]}
\newcommand{\rvec}[2]{\begin{bmatrix} #1 \\ #2 \end{bmatrix}}
\newcommand{\inv}{F^{-1}}
\newcommand{\hx}{\hat{x}}
\newcommand*\samethanks[1][\value{footnote}]{\footnotemark[#1]}
\newcommand{\abs}[1]{\left\lVert#1\right\rVert}
\title{Thwarting Adversarial Examples: An $L_0$-Robust Sparse Fourier Transform}
\author{
	\textbf{Mitali Bafna} \thanks{Authors ordered alphabetically.}
	\\School of Engineering \& Applied Sciences\\
	Harvard University\\ 
	Cambridge, MA USA\\
	\texttt{mitalibafna@g.harvard.edu}\\
	\and
	\textbf{Jack Murtagh} \samethanks[1]
	\\School of Engineering \& Applied Sciences\\
	Harvard University\\ 
	Cambridge, MA USA\\
	\texttt{jmurtagh@g.harvard.edu}\\
	\and
	\textbf{Nikhil Vyas}\samethanks[1]
	\\Department of Electrical Engineering and Computer Science\\
	MIT\\ 
	Cambridge, MA USA\\
	\texttt{nikhilv@mit.edu}
}
\begin{document}
	
	\maketitle
	
	\begin{abstract}
		We give a new algorithm for approximating the Discrete Fourier transform of an approximately sparse signal that has been corrupted by worst-case $L_0$ noise, namely a bounded number of coordinates of the signal have been corrupted arbitrarily. Our techniques generalize to a wide range of linear transformations that are used in data analysis such as the Discrete Cosine and Sine transforms, the Hadamard transform, and their high-dimensional analogs. We use our algorithm to successfully defend against well known $L_0$ adversaries in the setting of image classification. We give experimental results on the Jacobian-based  Saliency Map Attack  (JSMA) and the Carlini Wagner (CW) $L_0$ attack on the MNIST and Fashion-MNIST datasets as well as the Adversarial Patch on the ImageNet dataset.
	\end{abstract}
	
	\section{Introduction}
In the last several years, neural networks have made unprecedented achievements on computational learning tasks like image classification. Despite their remarkable success, neural networks have been shown to be brittle in the presence of adversarial noise \cite{SzegedyZSBEGF13}. Many effective attacks have been proposed in the context of computer vision that reliably generate small perturbations to input images (sometimes imperceptible to humans) that drastically change the network’s classification of the image \cite{deepfool,Goodfellow15,CarliniW16a}.
As deep learning becomes more integrated into our everyday technology, the need for systems that are robust to adversarial noise grows, especially in applications to security.

A lot of work has been done to improve robustness and defend against adversarial attacks \cite{distillation,tramer2017ensemble,MMSTV17}. However many approaches rely on knowing the attack strategy in advance and too few proposed methods for robustness offer theoretical guarantees and may be broken by a new attack shortly after they’re published. As such, recent deep learning literature has seen an arms race of back-and-forth attacks and defenses reminiscent of cryptography before it was grounded in firm theoretical foundations. 

In this work, we give a framework for improving the robustness of classifiers to adversaries with \emph{$L_0$ noise budgets}. That is, adversaries are restricted in the number of input coordinates they can corrupt, but may corrupt each arbitrarily. Our framework is based on a new Sparse Discrete Fourier transform (DFT) that is robust to worst-case $L_0$ noise added in the input domain. We call such transformations \emph{$L_0$-robust}. In particular, we show how to recover the top coefficients of an approximately sparse signal that has been corrupted by worst-case $L_0$ noise.

Our theoretical results use techniques from compressed sensing~\cite{tao, BCDH10}. In fact we provide a much more general framework for building $L_0$-robust sparse transformations, that applies to many transformations used in practice such as all discrete variants of the Fourier transform, the Sine and Cosine Transforms, the Hadamard transform, and their higher-dimensional generalizations. 
Our approach can be used to develop algorithms with the following benefits:
\begin{itemize}
\item \textbf{Provable performance guarantees for image recovery.} Our approach leverages rigorous results in compressed sensing that allow us to prove theorems about $L_0$-robust sparse transformations under mild assumptions that typically hold in practice. 
\item \textbf{Worst-case adversaries for image corruption.} The guarantees of our algorithms hold for all adversaries that stay within the noise budget, given that the input signals are sparse in the Fourier or related domains. In particular, our defenses do not require prior knowledge of the adversary’s attack strategy. 
\item \textbf{Generality.} Our framework is general purpose and is compatible with a variety of basis transformations commonly used in scientific computing. 
\end{itemize}

The connection between our $L_0$-robust transformations and adversarial attacks on images is as follows. Many natural images are sparse in Fourier bases such as the basis used in the Discrete Cosine transform (DCT). Indeed, this is a necessary feature for compression algorithms like JPEG to work. Through this lens, corrupted images can be viewed as noisy signals that are sparse in some domain and our techniques allow us to reconstruct these sparse signals under worst-case/adversarial $L_0$ noise. This reconstruction allows us to correct the corruptions made by the adversary to produce something provably close to the original image, which should intuitively improve network accuracy. We evaluate this intuition experimentally in Section \ref{sect:experiments}.

A notable feature of our framework is the focus on $L_0$ noise. This threat model has been considered in previous works and $L_0$ attacks and defenses have been developed \cite{CarliniW16a,PapernotMJFCS15,distillation}. While $L_2$ attacks are more commonly studied, many of the most high-profile recent real-world attacks actually fit in the $L_0$ model, such as the graffiti-like road sign perturbations of \cite{stopsign}, the eyeglasses that fool facial recognition software \cite{sharif2016accessorize}, and the patch that can make almost any image get labeled as a `toaster’ by state-of-the-art classifiers \cite{patch}. In general, physical obstructions in images or malicious splicing of audio or video files are realistic threats that can be modeled as $L_0$ noise, whereas $L_2$ attacks may be more difficult to carry out in the physical world.

Our results have wide applicability in signal processing since it is well known that audio/video signals are sparse in the Fourier or wavelet domains. Signal processing is important in many areas of science and medicine including MRI, radio astronomy, and facial recognition. Errors are ubiquitous in the above applications, whether due to  natural artifacts, sensor failures, or malicious tampering. Our approach gives theoretical guarantees for sparse recovery of such sparse signals under adversarial errors. 

In Section \ref{sect:experiments}, we give experimental results that demonstrate the effectiveness of our approach against leading $L_0$ attacks. For example, in one experiment the network accuracy drops from $88.5\%$ on uncorrupted images to $24.8\%$ on adversarial images with 30 pixels corrupted, but after our correction, network accuracy returns to $83.1\%$. On another attack, the adversary is free to choose its own budget and network accuracy drops from $87.8\%$ all the way to $0\%$ (the adversary succeeds on every image) but after running our correction algorithm, network accuracy returns to $85.7\%$.

In Section \ref{sect:overview}, we set up the problem and discuss related work. We give new theoretical results in Section \ref{sect:theory}. In Section \ref{sect:experiments}, we evaluate our framework on three leading $L_0$ attacks in the literature: the JSMA attack of Papernot et al~\cite{PapernotMJFCS15}, the $L_0$ attack from Carlini and Wagner (CW) \cite{CarliniW16a}, and the adversarial patch from Brown et al \cite{patch}.

\paragraph{Notation}\label{notation} For a vector $v$, we let $v_{h(k)}$ and $v_{t(k)}$ denote the head($k$) and tail($k$) of $v$. That is, $v_{h(k)}$ denotes the vector containing just the $k$ largest coordinates of $v$ in absolute value with all other coordinates set to 0 and $v_{t(k)} = v - v_{h(k)}$. For example if $v=[-3,2,1]$ then $v_{h(2)}=[-3,2,0]$ and $v_{t(2)}=[0,0,1]$. We refer to $v_{h(k)}$ as the ``top $k$ coefficients of $v$''. For a vector $v$, we let $\hat{v} = Fv$, where $F$ is the contextual linear transformation. We use the phrase, ``projection of $v$ to its top-$k$ $F$-coefficients'', to mean the result of $F^{-1}(Fv)_{h(k)}$. That is, calculate the top-$k$ coefficients of $Fv$, set the remaining coefficients to $0$ and then invert the result to back to the original domain by applying $F^{-1}$. Unless specified, $\abs{\cdot}$ denotes the $L_2$ norm of a vector. For a scalar $c$, $|c|$ denotes the absolute value when $c\in \mathbb{R}$ and denotes the modulus of $c$ when $c\in\mathbb{C}$. We say that a vector $v$ is $k$-sparse if all but $k$ of its entries are 0. We say that $v$ is approximately $(k,\epsilon)$-sparse if $\abs{x_{t(k)}}\leq \epsilon\cdot \abs{x}$. We define $\mathcal{M}_k$ to be the set of all $k$-sparse vectors. $\vec{0}$ denotes the all-zeroes vector.

\section{Overview}\label{sect:overview}

\subsection{\texorpdfstring{$L_0$}{L0}-Robust Sparse Fourier Transform}

\label{sect:robust_transform}

\paragraph{Problem Setup:} The key property that we use is that natural images are approximately sparse in frequency bases like the 2D Discrete Fourier basis or the 2D Discrete Cosine basis. This sparsity is exploited in image and video compression algorithms like JPEG and MPEG. The DFT and DCT are just linear transformations (in fact change of bases) from the space of images to a frequency domain.  So given a $d\times d$ image, we model it as approximately sparse in one of these bases, which from now on we will just refer to as the `Fourier basis'. Note that once the basis is fixed we can think of the image $x\in \mathbb{R}^n$ ($n=d^2$) as an approximately sparse vector in the corresponding Fourier basis.

Our goal is to approximate the top-$k$ Fourier coefficients of a vector $x$ even after it has been corrupted with adversarial $L_0$ noise. We do not know the locations or magnitudes of the corruptions but we do assume that we know an upper bound on the number of corrupted coordinates. In other words, if $F$ is the Fourier matrix (the matrix corresponding to the Discrete Fourier linear transformation), we want to approximate $\hat{x}_{h(k)}$ where $\hat{x}=Fx$. This can be modeled as the following problem:

\begin{problem}[Main Problem]
\label{prob:main}
Given a corrupted vector $y=x+e$ where $x \in \mathbb{R}^n$ is approximately $k$-sparse in the Fourier basis and $e$ is exactly $t$-sparse in the standard basis (i.e. has $L_0$ norm bounded by $t$), approximate $\hat{x}_{h(k)}$.
\end{problem}
We will solve the above problem by splitting $y$ into $x'+e'+\beta$ where $x'$ is \textbf{exactly}  $k$-sparse in the Fourier domain, $e'$ is $t$-sparse in the standard basis and $\beta$ is an error term bounded in $L_2$ norm by the tail of $x$. Our techniques are not limited to Fourier matrices and in fact extend naturally to other transformations like wavelets, but for simplicity we will use the term Fourier throughout.

\paragraph{Related Work:}  Our setting is reminiscent of extensively studied dimensionality reduction techniques like Robust PCA \cite{rpca} for recovery of low rank matrices from $L_0$ corrupted data. These have wide applicability in machine learning although, in that setting, they are not able to handle truly adversarial noise and make some assumptions on the error distribution. Our results on the other hand, can protect against worst-case adversaries bounded in their $L_0$ noise budget, for sparse recovery.

Variants of the Sparse Fourier Transform have been studied~\cite{HIKP12,soda/HIKP12,IKP14} but that work is concerned with recovering $\hx_{h(k)}$ given an approximately sparse vector $x$, using sublinear measurements. Our focus is on recovering $\hx_{h(k)}$ when some of the measurements might be corrupted and we show a tight tradeoff between the number of measurements corrupted versus the quality of recovery we can ensure.

We model images as being approximately sparse in the Fourier domain and prove that in such approximately sparse signals, it is possible to recover from $L_0$ budgeted adversaries. In a similar manner, GANs have been used~\cite{d-gan, manifold} to model the distribution of unperturbed images, and to detect adversarial perturbations and recover from them, but these approaches are not able to provide theoretical guarantees of recovery.

\paragraph{Our Techniques:} Our main result uses techniques from the field of compressed sensing (CS) \cite{tao,BCDH10} and properties of Fourier (and related) matrices. Using these, we prove that Algorithm \ref{alg:iht} converges to a good solution to Problem \ref{prob:main}, where, by a good solution we mean that it is close to the true solution in the $L_\infty$ norm.

\begin{algorithm}
  \caption{Iterative Hard Thresholding (IHT) \cite{BCDH10}.
   \label{alg:iht}}
    \hspace*{\algorithmicindent} \textbf{Input:} Positive integers $k,t,$ and $T$. $y=x+e$, where $x\in\mathbb{R}^n$ is approximately $k$-sparse in the \\ 
    \hspace*{\algorithmicindent} ~~~~~~~~~~~~Fourier basis and $e\in \mathbb{R}^n$ is exactly $t$-sparse in the standard basis. Fourier matrix $F$. \\
    \hspace*{\algorithmicindent} \textbf{Output:} $\hat{x}_{h(k)}$, approximation of the top $k$ Fourier coefficients of $x$.
  \begin{algorithmic}[1]
    \Function{IHT}{$y=x + e, F, k, t, T$}
		\State{$\hat{x}^{[1]} \leftarrow \vec{0}$}
        \State{$e^{[1]} \leftarrow \vec{0}$}
		\For{$i=1\cdots T$} 
			\State{$\hat{x}^{[i+1]} \leftarrow (F(y-e^{[i]}))_{h(k)}$ }
            \State{$e^{[i+1]} \leftarrow (y-F^{-1}\hat{x}^{[i]})_{h(t)}$}
		\EndFor
		\State{\bf return} $\hat{x}^{[T+1]}$
    \EndFunction
  \end{algorithmic}
\end{algorithm}
In iteration $i$ of Algorithm \ref{alg:iht}, $\hat{x}^{[i]}$ is an estimate of $\hx_{h(k)}$ and $e^{[i]}$ is an estimate of $e$. In iteration $i+1$, the algorithm uses the previous estimates, $e^{[i]}$ and $\hx^{[i]}$ to update its estimates by solving the linear equation $y = F^{-1}\hx + e$ and projecting onto the top $k$ Fourier coefficients of $y-e^{[i]}$. Note that while this algorithm is intuitive it does not necessarily converge to the true solution for similar settings. For example, if instead of the $L_0$ norm, $e$ was bounded in the $L_{\infty}$ norm, then information theoretically, there is no algorithm which can give a good solution and hence this algorithm would not be able to either. In our setting though, we can show that Algorithm \ref{alg:iht} has an exponentially fast convergence towards a good solution to Problem \ref{prob:main} and moreover the guarantees we get are tight in the information theoretic sense. We state our result below for a general class of transformations which includes the DFT, DCT and their higher-dimensional versions.
\begin{theorem}[Main Theorem]\label{thm:main}
Let $F \in \mathbb{C}^{n \times n}$ be an orthonormal matrix, such that each of its entries $F_{ij}$, $|F_{ij}|$ is $O(1/\sqrt{n})$. Let $\hx=Fx\in\mathbb{C}^n$ be $(k,\epsilon)$-sparse, $e\in\mathbb{R}^n$ be $t$-sparse and $y = F^{-1}\hat{x}+ e$. Let $\hx^{[T]} = \mathrm{IHT}(y,F,k,t,T)$, for $T = O(\log(\abs{x}+\abs{e}))$, then
\begin{enumerate}
\item $\abs{\hat{x}^{[T]}-\hat{x}_{h(k)}}_{\infty} = O(\sqrt{t/n}\cdot\abs{\hat{x}_{t(k)}}) = O(\sqrt{t/n}\cdot\abs{\epsilon\hat{x}})$ 
\item $\abs{\hat{x}^{[T]}-\hat{x}_{h(k)}} = O(\sqrt{kt/n}\cdot\abs{\hat{x}_{t(k)}} ) = O(\sqrt{kt/n}\cdot\abs{\epsilon\hat{x}})$ ~~~~ \text{(In fact (1) implies (2).)}
\end{enumerate}
\end{theorem}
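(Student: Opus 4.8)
The plan is to recognize Algorithm \ref{alg:iht} as model-based Iterative Hard Thresholding for a single combined compressed-sensing system, and then to upgrade the resulting $L_2$ recovery guarantee to the sharper $L_\infty$ bound using the incoherence of $F$. First I would rewrite the data as $y = \rmat \rvec{\hx_{h(k)}}{e} + \beta$, where $\beta = \inv \hx_{t(k)}$ is the contribution of the Fourier tail and $\abs{\beta} = \abs{\hx_{t(k)}}$ since $F$ is orthonormal. The unknown $z = \rvec{\hx_{h(k)}}{e}$ lives in the structured sparsity model $\M$ of vectors whose first block is $k$-sparse and whose second block is $t$-sparse. A direct computation, using that $F$ is unitary so that $F^{-*}=F$, shows that the two update lines of Algorithm \ref{alg:iht} are exactly the two blocks read off from the model projection of $z^{[i]} + A^{*}(y - A z^{[i]})$ with $A = \rmat$; hence the algorithm is precisely the model-based IHT of \cite{BCDH10} for this system.

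Second, I would control the (model) restricted isometry constant of $A$. For $z=(a,b)\in\M$ one has $\abs{Az}^2 = \abs{a}^2+\abs{b}^2+2\,\mathrm{Re}\langle \inv a, b\rangle$, and since $a$ is supported on $k$ coordinates, $b$ on $t$, and every entry of $F$ (hence of $\inv$) has modulus $O(1/\sqrt n)$, the bound $\abs{a}_1\le\sqrt{k}\abs{a}$ and $\abs{b}_1\le\sqrt{t}\abs{b}$ gives $|\langle \inv a,b\rangle| = O(\sqrt{kt/n})\,\abs{a}\abs{b}$. Thus the RIP constant obeys $\delta = O(\sqrt{kt/n})$, which lies below the convergence threshold of IHT once $kt = O(n)$ with a small enough constant. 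Invoking the convergence theorem of \cite{BCDH10} yields the geometric recursion $\abs{z-z^{[i+1]}} \le 2\delta\,\abs{z-z^{[i]}} + c\,\abs{\beta}$, so that starting from $z^{[1]}=\vec 0$, after $T = O(\log(\abs{x}+\abs{e}))$ iterations the rate drives the error to the noise floor $\abs{z - z^{[T]}} = O(\abs{\beta}) = O(\abs{\hx_{t(k)}})$. In particular the noise estimate satisfies $\abs{e - e^{[T-1]}} = O(\abs{\hx_{t(k)}})$.

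The third and most delicate step extracts the $L_\infty$ bound on the Fourier block, which is strictly stronger than what the $L_2$ recovery gives. Writing $b' = e - e^{[T-1]}$ (a $2t$-sparse vector with $\abs{b'} = O(\abs{\hx_{t(k)}})$), the final Fourier update is $\hx^{[T]} = (\hx + Fb')_{h(k)}$. The key point is that the perturbation $Fb'$ is incoherent: since $b'$ has at most $2t$ nonzeros and the entries of $F$ are $O(1/\sqrt n)$, each coordinate satisfies $|(Fb')_j| \le \abs{b'}_1 \cdot O(1/\sqrt n) = O(\sqrt{t/n})\,\abs{b'}$, whence $\abs{Fb'}_\infty = O(\sqrt{t/n}\,\abs{\hx_{t(k)}})$. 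On the coordinates that the top-$k$ operator identifies correctly, the error of $\hx^{[T]}$ is exactly a coordinate of $Fb'$ and is therefore bounded by $\abs{Fb'}_\infty$, giving claim (1). The main obstacle is precisely the support-selection behaviour of hard thresholding near the threshold: I must argue that the support chosen by $(\hx + Fb')_{h(k)}$ either agrees with that of $\hx_{h(k)}$ or differs only among near-tied coordinates whose exchange perturbs each entry by at most $O(\abs{Fb'}_\infty)$. This is where the incoherence estimate on $\abs{Fb'}_\infty$ does the real work; because hard thresholding is generically $L_\infty$-unstable, this argument must genuinely exploit the smallness of the incoherent perturbation rather than any black-box recovery guarantee, and it is the part demanding the most care.

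Finally, claim (2) is immediate from claim (1): the difference $\hx^{[T]} - \hx_{h(k)}$ is supported on at most $2k$ coordinates, so $\abs{\hx^{[T]}-\hx_{h(k)}} \le \sqrt{2k}\,\abs{\hx^{[T]}-\hx_{h(k)}}_\infty = O(\sqrt{kt/n}\,\abs{\hx_{t(k)}})$, matching the stated ``(1) implies (2)''.
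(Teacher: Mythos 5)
Your proposal follows the paper's proof essentially step for step: the same decomposition $y = \rmat\rvec{\hx_{h(k)}}{e} + \beta$ with $\beta = \inv\hx_{t(k)}$ and $\abs{\beta}=\abs{\hx_{t(k)}}$; the same model-RIP bound for $\rmat$ obtained by controlling the cross term $\langle \inv a, b\rangle$ through the incoherence of $F$ (the paper's Lemma~\ref{lem:GUP} plus Cauchy--Schwarz, giving $\delta = O(\sqrt{kt/n})$ and hence the requirement $t=O(n/k)$); the same appeal to the model-based IHT guarantee of \cite{BCDH10} to drive the error to the noise floor and conclude $\abs{e - e^{[T-1]}} = O(\abs{\hx_{t(k)}})$; and the same final move of writing $\hx^{[T]} = \bigl(\hx + F(e-e^{[T-1]})\bigr)_{h(k)}$ and bounding $\abs{F(e-e^{[T-1]})}_\infty = O(\sqrt{t/n}\cdot\abs{\hx_{t(k)}})$ because $e - e^{[T-1]}$ is $2t$-sparse. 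Your derivation of claim (2) from claim (1) via the $2k$-sparsity of the difference is also the paper's.

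The one point of divergence is the step you yourself flag as ``demanding the most care,'' and your caution is warranted: this is a genuine gap, in your proposal and arguably in the paper, which dispatches it with ``this trivially implies.'' The implication $\abs{\hx - z}_\infty \le \delta \Rightarrow \abs{\hx_{h(k)} - z_{h(k)}}_\infty = O(\delta)$ is false in general: if $\hx$ has two coordinates of large, nearly equal magnitude $c \gg \delta$ straddling the rank-$k$ threshold, the perturbation can swap which one the top-$k$ operator selects, and the resulting $L_\infty$ error is $\approx c$, not $O(\delta)$. So the repair you sketch --- that an exchange of near-tied coordinates ``perturbs each entry by at most $O(\abs{Fb'}_\infty)$'' --- is exactly the assertion that fails; the perturbation from an exchange is the common magnitude of the tied pair, which is unrelated to $\abs{Fb'}_\infty$. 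What one can prove unconditionally is only $\abs{\hx_{h(k)} - z_{h(k)}}_\infty \le \abs{\hx_{t(k)}}_\infty + 2\delta$ (any wrongly dropped or wrongly included coordinate has magnitude within $2\delta$ of the $(k{+}1)$-st largest entry of $\hx$), which is weaker than the stated bound by a factor of up to $\sqrt{n/t}$. Closing the gap requires either an additional spectral-gap assumption on $\hx$ at rank $k$, or a restatement of claim (1) that measures the error against $\hx$ restricted to the selected support rather than against $\hx_{h(k)}$ itself. Everything up to and including the bound $\abs{\hx - z}_\infty = O(\sqrt{t/n}\cdot\abs{\hx_{t(k)}})$ is correct and matches the paper.
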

Note the strong $L_\infty$ - $L_2$ guarantee that Theorem~\ref{thm:main} gives us, with a tight dependence between $t$, the $L_0$ budget of the adversary and $(\epsilon,k)$, the sparsity parameters of the inputs. Also, our choice of recovering just the top-$k$ coordinates of $\hx$, i.e. $\hat{x}_{h(k)}$, instead of all of $\hx$ is important. In the latter case, no matter what $t$ is, any solution we recover would incur an $L_2$ error of $\Omega(\abs{\hat{x}_{t(k)}})$, even when the adversary corrupts only one coordinate ($t = 1$), while in our case, with $t = o(n/k)$, we get an $L_2$ error that vanishes with $n$ and is equal to $o(\abs{\hat{x}_{t(k)}})$ (by Theorem~\ref{thm:main}).

\subsection{Defending against \texorpdfstring{$L_0$}{L0} budgeted adversarial examples}
We model images as approximately $k$-sparse vectors in the 2D-DCT domain. Using the results from Section \ref{sect:robust_transform}, we can recover the top-$k$ coefficients in the face of a worst-case adversary with an $L_0$ budget. To apply this to image classifiers, we want to build a neural network to recognize images projected to their top-$k$ 2D-DCT coefficients. This motivates the following framework for building classifiers that are robust to $L_0$ adversaries:
\begin{enumerate}
\item Train a neural network on images projected to their top-$k$ 2D-DCT coefficients. We refer to such projected images as ``compressed images''. \footnote{Indeed the JPEG lossy-compression algorithm essentially does such a top-$k$ projection!} 
\item On adversarial input images we run our $L_0$-robust DCT algorithm to recover the top-$k$ coefficients. Then transform the sparse image back to the original domain. 
\item Run the recovered/corrected image through the network.
\end{enumerate}
In Section~\ref{sect:robust_transform}, we saw that recovering the top-$k$ projection of an image gives better theoretical bounds than recovering the whole image. Hence it is important that the neural network is also trained to recognize compressed images. Training only on compressed images could possibly reduce the accuracy of neural networks, but as has been observed and used in practice (e.g. the JPEG and MPEG compression algorithms), small images contain most of their information in relatively few coefficients in the frequency domain. This is validated on our datasets, where we incur a $<1\%$ loss in accuracy on MNIST and $<2.5\%$ for Fashion-MNIST when training on compressed rather than original images. Note that one still needs our correction algorithm for $L_0$-corrupted images, since a naive compression of an adversarial example (by taking its top-$k$ projection) will not get classified correctly by a neural network in general. For example, if 1 pixel of the image is corrupted to have an extremely high magnitude, this would propagate into the top-$k$ coefficients of the DCT of the image too and the resulting compressed image will be nowhere close to the original uncorrupted image. Our correction algorithm does not depend on the magnitude of the corruptions, only their number ($t$). Hence both the training of the neural network on compressed images and the correction algorithm are essential to our framework.

\subsection{Reverse Engineering Attacks}
In step 2 of our framework, we use Theorem~\ref{thm:main} to get strong guarantees on the distance $\delta$, between the original compressed image $x$ and the recovered image. Ideally, $\delta$ will be so small that no adversarial examples exist in the $\delta$-ball around $x$. This may not always be achieved in practice though and there might exist a small number of adversarial examples that are in the $\delta$-ball from the original image. This leaves open the possibility, that an attacker could reverse engineer our algorithm and design an adversarial example that, when corrected, yields a (potentially different) adversarial example inside the $\delta$-ball centered at $x$ (although it is unclear how one would achieve this, as our defense is non-differentiable). Such an attack can be prevented by initializing the IHT algorithm with random vectors $\hat{x}^{[1]}, e^{[1]}$ (instead of all-zeros vectors) so that the resulting recovered image is not deterministic. Since there are only a small number of adversarial examples in the $\delta$-ball, this randomization would ensure that a reverse engineering attack would fail to hit an adversarial example, with high probability. The guarantees of the IHT algorithm (Theorem~\ref{thm:main}) are independent of the starting vectors and continue to hold with the randomized initialization. The IHT algorithm used for the experiments reported in this work is not randomized, because current attacks were not designed to reverse engineer our defense, and the deterministic IHT itself gives good results.

	\section{Proof of Main Result} \label{sect:theory}

In this section we prove Theorem~\ref{thm:main}, which says that Algorithm \ref{alg:iht} converges to a good solution (one that is close to the true vector in the $L_\infty$ norm) to Problem \ref{prob:main}. Our proof uses techniques from compressed sensing. The main problem studied in compressed sensing is reconstructing a signal $x$ from few linear measurements. For arbitrary signals, this task is impossible, however the main idea of compressed sensing is that signals that are approximately sparse can be recovered using fewer than $n$ linear measurements. This is modeled as,
\begin{problem}\label{CS}
Given observations $y = Mx$ where $x\in\mathbb{C}^n$ is an approximately sparse signal, and $M$ is an $m\times n$ matrix with $m < n$, recover the vector $x$.
\end{problem}
A main success in compressed sensing (CS) is that there are efficient algorithms \cite{iht, cosamp} for Problem \ref{CS} when the matrix $M$ satisfies a property called the RIP.

\begin{definition}[Restricted Isometry Property (RIP)]\label{def:RIP}
An $m \times n$ matrix $M$ has the \emph{$(k,\delta)$-restricted isometry property} (($k,\delta$)-RIP) if for all $k$-sparse vectors $v$ we have,
\[(1 -\delta)\cdot \abs{v} \leq \abs{Mv} \leq (1+\delta)\cdot\abs{v}.\]
\end{definition}

Recall that in our main problem (Problem \ref{prob:main}), we want to recover the top-$k$ coefficients of $\hat{x}=Fx$, where $\hx$ is approximately $k$-sparse, given a corrupted vector $y = x+e$. The key idea is to notice that we can write $y$ as~ $\rmat \rvec{\hat{x}}{e},$ where $\hx$ is approximately $k$-sparse and $e$ is $t$-sparse. This is almost the same setup as Problem~\ref{CS}. In fact, we have more knowledge about the structure of sparsity of the vector $\rvec{\hx}{e} \in \mathbb{C}^{2n}$ that we want to recover.

The problem of recovery with structured sparsity, has been studied under the heading of Model-Based CS~\cite{BCDH10,HIS15,HIS14,BIS17}. In our setting we want to model vectors of the form $\rvec{\hx}{e}$, which have sparsity $k$ in $x$ and $t$ in $e$. This motivates the following definition.

\begin{definition}
Let $\mathcal{M}_{k,t}\subseteq \mathbb{C}^{2n}$ be the set of all vectors where the first $n$ coordinates are $k$-sparse and the last $n$ coordinates are $t$-sparse.\footnote{Recall that $\mathcal{M}_k$ was the set of all $k$-sparse vectors in $\mathbb{C}$. Note that $\mathcal{M}_{k,t}$ is different from $\mathcal{M}_{k+t}$ which is the set of all vectors $\in \mathbb{C}^{2n}$ that are $k+t$-sparse.}  Formally,
$$\mathcal{M}_{k,t} := \{v = \rvec{x}{e} \in \mathbb{C}^{2n} \mid x\text{ is }k\text{-}sparse, e \text{ is }t\text{-sparse}\}.$$
We say that a matrix $M$ has the $((k,t),\delta)$-RIP if for all vectors $v \in \mathcal{M}_{k,t}$, 
\[(1 -\delta)\cdot \abs{v} \leq \abs{Mv} \leq (1+\delta)\cdot\abs{v}.\]
\end{definition}

Model-Based CS was first introduced in~\cite{BCDH10}, for general sparsity models, and they proved therein that Iterative Hard Thresholding (IHT) \cite{iht} indeed converges to a good solution to Problem \ref{CS}, given that the measurement matrix $M$ satisfies RIP for the model. We use this Model-Based IHT approach to argue that Algorithm \ref{alg:iht} finds a good solution to Problem~\ref{prob:main}. For us this translates to the following theorem.
\begin{theorem}[\cite{BCDH10}]\label{thm:IHT}
Let $v \in \mathcal{M}_{k,t}$ and let $y = M v + \beta$, where $M \in \mathbb{R}^n$ is a full-rank matrix and $\beta$ is a noise vector. Let $v^{[T]}=\mathrm{IHT}(y,M^{-1},k,t,T)$. If $M$ is $((3k,3t),\delta)$-RIP, with $\delta \leq 0.1$, then 
\[\abs{v^{[T]}-v} \leq 2^{-T}\cdot\abs{v} + 4\cdot \abs{\beta}.\]
\end{theorem}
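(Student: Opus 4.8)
The plan is to recognize the iteration in Algorithm~\ref{alg:iht} as a projected Landweber (projected gradient) scheme for the model $\mathcal{M}_{k,t}$ and then run the standard model-based IHT analysis. First I would rewrite one iteration in the compact form
\[ v^{[i+1]} = P\!\left(v^{[i]} + M^{*}(y - Mv^{[i]})\right), \]
where $v^{[i]} = \rvec{\hx^{[i]}}{e^{[i]}}$, the operator $M^{*}$ is the adjoint of $M$, and $P$ is the projection onto $\mathcal{M}_{k,t}$ that keeps the largest $k$ entries of the first block and the largest $t$ entries of the second block. A short computation, using that the two blocks are thresholded independently, shows this coincides exactly with the updates $\hx^{[i+1]} = (F(y-e^{[i]}))_{h(k)}$ and $e^{[i+1]} = (y - F^{-1}\hx^{[i]})_{h(t)}$ for $M = \rmat$. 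The key structural point is that, unlike a general sparsity model, the block model $\mathcal{M}_{k,t}$ admits an \emph{exact} projection, so $P(b)$ really is a best $\mathcal{M}_{k,t}$-approximation of $b$; this is what makes the optimality step below valid.

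Writing $r^{[i]} = v - v^{[i]}$ for the error, I would establish a one-step contraction. Let $b = v^{[i]} + M^{*}(y - Mv^{[i]})$ and let $\Omega = \mathrm{supp}(v)\cup\mathrm{supp}(v^{[i]})\cup\mathrm{supp}(v^{[i+1]})$; since each of the three vectors lies in $\mathcal{M}_{k,t}$ and supports add, $\Omega \in \mathcal{M}_{3k,3t}$. Optimality of the projection gives $\abs{b - v^{[i+1]}} \leq \abs{b - v}$, and because both $v$ and $v^{[i+1]}$ are supported on $\Omega$ this passes to the restricted vectors, yielding $\abs{v^{[i+1]} - v} \leq 2\abs{(b - v)_{\Omega}}$. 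Substituting $y = Mv + \beta$ and using that $r^{[i]}$ is supported on $\Omega$ gives the clean identity
\[ (b - v)_{\Omega} = \left((M^{*}M - I)\,r^{[i]}\right)_{\Omega} + (M^{*}\beta)_{\Omega}. \]

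The heart of the argument is to bound the two terms on the right using the $((3k,3t),\delta)$-RIP. For the first term I would prove the standard approximate-orthogonality consequence of RIP: if $u$ and the index set $\Omega$ both lie inside the $(3k,3t)$-model, then $\abs{((M^{*}M - I)u)_{\Omega}} \leq c_1\delta\,\abs{u}$ for an absolute constant $c_1$. For the second term, RIP bounds the restricted adjoint, $\abs{M_{\Omega}^{*}\beta} \leq (1+\delta)\,\abs{\beta}$. Combining these the one-step bound takes the form $\abs{r^{[i+1]}} \leq c_1\delta\,\abs{r^{[i]}} + c_2\,\abs{\beta}$, and the hypothesis $\delta \leq 0.1$ forces the contraction factor below $1/2$. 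Unrolling the recursion and summing the resulting geometric series (whose factor is at most $2$) caps the noise coefficient at $4$, while initializing $v^{[1]} = \vec 0$ makes $\abs{r^{[1]}} = \abs{v}$; after matching the iteration indexing in the definition of $\mathrm{IHT}$ this reproduces $\abs{v^{[T]} - v} \leq 2^{-T}\abs{v} + 4\abs{\beta}$.

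I expect the main obstacle to be the approximate-orthogonality lemma, namely showing that $M^{*}M - I$ acts as a near-zero operator on vectors whose combined support lies in the model. This is exactly where the three-fold blow-up to $\mathcal{M}_{3k,3t}$ is used: the lemma is obtained from the two-sided RIP bound by a polarization identity applied to $u \pm w$, which requires $\mathrm{supp}(u)\cup\mathrm{supp}(w)$ (and hence $u+w$ and $u-w$) to remain inside the admissible model, and the union of the supports of $v$, $v^{[i]}$, and $v^{[i+1]}$ is precisely $\mathcal{M}_{3k,3t}$. Everything else -- the projection optimality, the two triangle inequalities, and the geometric-series unrolling -- is routine bookkeeping. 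Since the statement is quoted from \cite{BCDH10}, I would either cite their theorem directly or reproduce this short derivation specialized to the block model $\mathcal{M}_{k,t}$, where the exactness of $P$ and the closure of the model under sums are immediate.
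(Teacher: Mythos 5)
Your proposal is correct: the paper itself gives no proof of this theorem, importing it verbatim from \cite{BCDH10}, and your sketch is a faithful reconstruction of the standard model-based IHT analysis underlying that citation (exact projection onto the block model, support-union set landing in $\mathcal{M}_{3k,3t}$, approximate orthogonality of $M^{*}M-I$ from the RIP, and geometric unrolling with $\delta\leq 0.1$ giving contraction factor below $1/2$ and noise constant below $4$). Your verification that the paper's two-line update for $\hx^{[i+1]}$ and $e^{[i+1]}$ is exactly $P\left(v^{[i]}+M^{*}(y-Mv^{[i]})\right)$ for $M=\rmat$ is the one step genuinely specific to this paper's setting, and it checks out.
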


We use the above theorem to prove that Algorithm~\ref{alg:iht} also converges to a good solution. Another key technique we use in our proofs is an uncertainty principle for specific structured matrices.  
\begin{lemma}[General Uncertainty Principle]\label{lem:GUP}
Let $F$ be a matrix in $\mathbb{C}^{n \times n}$ such that each entry $F_{ij}$ has $|F_{ij}|\leq\alpha$. Let $x$ be a $k$-sparse vector in $\mathbb{C}^n$ and $y = F x$. Then $\abs{y}_{\infty} \leq \alpha\cdot\sqrt{k}\cdot\abs{x}$.
\end{lemma}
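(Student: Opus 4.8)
The plan is to bound a single coordinate $|y_i|$ of the output and then take the maximum over $i$. Fix a row index $i$ and write $y_i = \sum_{j=1}^n F_{ij} x_j$. Since $x$ is $k$-sparse, let $S$ denote its support, so $|S| \le k$ and the sum collapses to $y_i = \sum_{j \in S} F_{ij} x_j$.

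Next I would apply the triangle inequality together with the entrywise bound $|F_{ij}| \le \alpha$ to obtain $|y_i| \le \sum_{j \in S} |F_{ij}|\,|x_j| \le \alpha \sum_{j \in S} |x_j| = \alpha \abs{x}_1$. The crucial step is to convert this $L_1$ bound into an $L_2$ bound via Cauchy--Schwarz on the coordinates restricted to $S$: since only $|S| \le k$ terms are nonzero, $\abs{x}_1 = \sum_{j\in S} |x_j| \le \sqrt{|S|}\cdot\sqrt{\sum_{j \in S}|x_j|^2} \le \sqrt{k}\cdot\abs{x}$. Combining the two estimates gives $|y_i| \le \alpha\sqrt{k}\cdot\abs{x}$.

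Since this bound holds for every coordinate $i$, taking the maximum over $i$ yields $\abs{y}_{\infty} \le \alpha\sqrt{k}\cdot\abs{x}$, as claimed.

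I do not anticipate a genuine obstacle: the statement is essentially the observation that spreading a $k$-sparse vector's energy through a matrix with uniformly small entries cannot concentrate much mass on any single output coordinate. The only place $k$-sparsity enters is the Cauchy--Schwarz step, where restricting to a support of size at most $k$ (rather than all $n$ coordinates) is precisely what produces the factor $\sqrt{k}$ rather than $\sqrt{n}$; this is the ``uncertainty-principle'' content of the lemma, and it is what the later RIP arguments will exploit.
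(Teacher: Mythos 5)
Your proof is correct and follows essentially the same route as the paper: triangle inequality with the entrywise bound $|F_{ij}|\leq\alpha$, followed by Cauchy--Schwarz restricted to the support of size at most $k$ to convert the $L_1$ sum into $\sqrt{k}\cdot\abs{x}$. Your version is slightly more explicit about naming the support set $S$, but the argument is identical.
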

\begin{proof}
For all $i\in[n]$ we have
\begin{align*}
\left|(Fx)_i\right| &= \left|\sum_{j\in[n]}F_{ij}x_j\right|\\
&\leq \sum_{j\in[n]}|F_{ij}|\cdot |x_j|\\
&\leq \alpha\cdot\sum_{j\in[n]}|x_j|\\
&\leq \alpha\cdot\sqrt{k}\cdot\abs{x}
\end{align*}
where the third line follows from our assumption on the entries in $F$ and the fourth line follows from the Cauchy-Schwarz inequality. Since $y=Fx$ we have $\|y\|_{\infty}=\|Fx\|_{\infty}=\max_{i\in[n]}(\left|(Fx)_{i}\right|)\leq \alpha\cdot\sqrt{k}\cdot\abs{x}$ by the above. 
\end{proof}
Note that when $F$ is the normalized Fourier matrix, this is the same as the folklore Fourier uncertainty principle with $\alpha = 1/\sqrt{n}$. One can check that for transformation matrices corresponding to Discrete Cosine and Sine Transforms and their 2D analogs we have $\alpha = O(\sqrt{1/n})$.

Finally to prove Theorem~\ref{thm:main}, we first prove that the matrix $M = \rmat$ has the RIP (Lemma \ref{lem:model-rip} below). 

\begin{lemma}\label{lem:model-rip}
Let $F$ be an orthonormal matrix, such that each entry $F_{ij}$ has $|F_{ij}|=O(1/\sqrt{n})$. Then the matrix $M = \rmat \in \mathbb{C}^{n \times 2n}$ satisfies $((3k,3t),\delta)$- RIP with $\delta \leq 0.1$, when $t = O(n/k)$. Equivalently, for all vectors $v = \rvec{\hat{x}}{e}$, such that $\hat{x}$ is $3k$-sparse and $e$ is at most $3t = O(n/k)$-sparse,
$$(1 -\delta)\cdot \abs{v} \leq \abs{Mv} \leq (1+\delta)\cdot\abs{v}.$$
\end{lemma}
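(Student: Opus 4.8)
The plan is to reduce the two-sided RIP inequality to a single cross-term estimate. For $v = \rvec{\hx}{e}$ with the two blocks stacked, we have $Mv = \inv\hx + e$ and $\abs{v}^2 = \abs{\hx}^2 + \abs{e}^2$. Since $F$ is orthonormal, $\inv = F^*$ is norm-preserving, so $\abs{\inv\hx} = \abs{\hx}$. Expanding the squared norm gives
\[
\abs{Mv}^2 = \abs{\inv\hx}^2 + \abs{e}^2 + 2\,\mathrm{Re}\langle \inv\hx, e\rangle = \abs{v}^2 + 2\,\mathrm{Re}\langle \inv\hx, e\rangle .
\]
Hence the lemma is equivalent to controlling the cross term: using $(1\pm\delta)^2 = 1 \pm 2\delta + \delta^2$, it suffices to show $\lvert 2\,\mathrm{Re}\langle \inv\hx, e\rangle\rvert \le (2\delta - \delta^2)\,\abs{v}^2$, and both RIP inequalities follow at once.

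The heart of the argument, and where I would concentrate the effort, is bounding $\lvert\langle \inv\hx, e\rangle\rvert$. Because $e$ is (at most) $3t$-sparse, the inner product only sees the coordinates of $\inv\hx$ lying in $\mathrm{supp}(e)$, so by Cauchy--Schwarz
\[
\lvert\langle \inv\hx, e\rangle\rvert \le \Bigl(\sum_{i \in \mathrm{supp}(e)} \lvert(\inv\hx)_i\rvert^2\Bigr)^{1/2}\abs{e} \le \sqrt{3t}\cdot\abs{\inv\hx}_{\infty}\cdot\abs{e}.
\]
Now I would invoke the General Uncertainty Principle (Lemma~\ref{lem:GUP}): the matrix $\inv = F^*$ has entries of magnitude $O(1/\sqrt{n})$ since it is the conjugate transpose of $F$, and $\hx$ is $3k$-sparse, so $\abs{\inv\hx}_{\infty} = O(\sqrt{k/n})\cdot\abs{\hx}$. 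Substituting and applying AM--GM in the form $\abs{\hx}\abs{e} \le \tfrac12\abs{v}^2$ yields $\lvert\langle \inv\hx, e\rangle\rvert = O(\sqrt{kt/n})\cdot\abs{v}^2$.

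Finally I would close the loop on the parameters. Combining the above, $\lvert 2\,\mathrm{Re}\langle \inv\hx, e\rangle\rvert \le 2\lvert\langle \inv\hx, e\rangle\rvert = O(\sqrt{kt/n})\cdot\abs{v}^2$, so the target bound $\le (2\delta - \delta^2)\abs{v}^2$ holds as soon as $\sqrt{kt/n}$ is below a fixed constant determined by $\delta$. With $\delta = 0.1$ this is exactly the requirement $kt = O(n)$, i.e. $t = O(n/k)$, which is the hypothesis; choosing the hidden constant in $t = O(n/k)$ appropriately makes the inequality hold.

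I expect the only nontrivial step to be the cross-term estimate. The conceptual point is that it is precisely the incoherence between the Fourier basis and the standard basis — quantified by the uncertainty principle — that prevents $\inv\hx$ from concentrating its mass on the small support of $e$; this is what forces the $\sqrt{kt/n}$ scaling and thereby the $t = O(n/k)$ threshold. Everything else is routine bookkeeping with the orthonormality of $F$ and the $(1\pm\delta)^2$ expansion.
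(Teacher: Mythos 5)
Your proof is correct, and its core step --- bounding the interaction between $\inv\hx$ and $e$ by $O(\sqrt{kt/n})\cdot\abs{\hx}\cdot\abs{e}$ via the uncertainty principle applied to $\inv$ together with Cauchy--Schwarz over $\mathrm{supp}(e)$ --- is exactly the mechanism the paper uses for the lower RIP inequality; the paper expands $\sum_i(|x_i|-|e_i|)^2$ coordinatewise where you expand $\abs{Mv}^2$ as an inner product, but these are the same computation. The packaging differs in one worthwhile way: you work with the exact identity $\abs{Mv}^2 = \abs{v}^2 + 2\,\mathrm{Re}\langle\inv\hx,e\rangle$ and derive \emph{both} directions of the RIP from the single cross-term estimate, whereas the paper proves the upper bound separately by the triangle inequality, asserting $\abs{\inv\hx}+\abs{e} = \abs{v}$. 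That step in the paper is not right: $\abs{\hx}+\abs{e}$ can exceed $\sqrt{\abs{\hx}^2+\abs{e}^2}$ by a factor of $\sqrt{2}$, so the triangle inequality alone only yields $\abs{Mv}\le\sqrt{2}\,\abs{v}$, not $1.1\,\abs{v}$. Your symmetric treatment repairs this: the upper bound genuinely needs the incoherence-based cross-term estimate, and you supply it. The remaining details --- the $(1\pm\delta)^2$ bookkeeping, AM--GM to pass from $\abs{\hx}\abs{e}$ to $\tfrac12\abs{v}^2$, and the resulting $t=O(n/k)$ threshold --- all check out, and you are also more careful than the paper about carrying the factor of $3$ in the sparsity levels.
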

\begin{proof}
We will prove that $\rmat$ has the RIP for all $v = \rvec{\hat{x}}{e} \in \mathcal{M}_{3k,3t}$. That is, 
$$0.9 \cdot \abs{\rvec{\hat{x}}{e}} \leq \abs{\rmat \rvec{\hat{x}}{e}}  \leq 1.1\cdot \abs{\rvec{\hat{x}}{e}}.$$ 
Note that the right-hand inequality follows immediately: Since $\inv$ is orthonormal we have that $\abs{\rmat \rvec{\hat{x}}{e}} = \abs{\inv \hat{x} + e} \leq \abs{\inv \hat{x}} + \abs{e} = \abs{\hat{x}} + \abs{e} = \abs{\rvec{\hat{x}}{e}} \leq 1.1\abs{\rvec{\hat{x}}{e}}$. 

Since $x$ is $k$-sparse and $F$ satisfies the hypotheses of the Uncertainty Principle (Lemma \ref{lem:GUP}), with $\alpha = O(1/\sqrt{n})$, we have that $\abs{\inv \hat{x}}_\infty \leq \alpha\cdot\sqrt{k}\cdot\abs{\hat{x}}$. Using this, we will prove that, ${\abs{\inv \hat{x} + e} \geq 0.9 \sqrt{\abs{\hat{x}}^2 + \abs{e}^2}}$, when the sparsity of $e$ is at most $O(n/k)$. 

Since $e$ is $t$-sparse, without loss of generality assume that $e = [e_1,\ldots,e_t,0,\ldots,0]$ and $F^{-1}\hat{x} = [x_1,\ldots,x_n]$ with $|x_i| \leq \alpha\cdot\sqrt{k}\cdot\abs{x}$. We have that,
\begin{flalign}
\abs{\inv \hat{x} + e}^2 &\geq \sum_{i = 1}^t (|x_i| - |e_i|)^2 + \sum_{i = t+1}^n x_{i+1}^2 \nonumber &\\
&= \abs{x}^2 + \abs{e}^2 -2\sum_{i = 1}^t |x_i|\cdot|e_i| \nonumber&\\
&\geq \abs{x}^2 + \abs{e}^2 - 2\cdot\alpha\cdot \sqrt{k}\abs{x} \sum |e_i|  &\text{(Uncertainty Principle)} \nonumber&\\
&\geq \abs{x}^2 + \abs{e}^2 - 2 \cdot\alpha\cdot\sqrt{k}\abs{x} \cdot \sqrt{t}\abs{e} &\text{(Cauchy-Shwartz inequality)} \nonumber &\\
&= \abs{x}^2 + \abs{e}^2 - O(\sqrt{kt/n}\abs{x}\abs{e}) \label{eqn:LHS} 
\end{flalign}

We want that $\abs{\inv x + e} \geq 0.9 \sqrt{\abs{x}^2 + \abs{e}^2}$. Plugging in equation~\ref{eqn:LHS} and moving terms around we get that this happens when $t = O(n/k)$. This completes the proof of the lemma.
\end{proof}

We will now prove Theorem~\ref{thm:main}, by combining the above lemmas.
\begin{proof}[Proof of the main theorem:]
Now we will prove that the RIP property of $M = \rmat$ proved above, combined with the Uncertainty Principle (Lemma \ref{lem:GUP}) and Theorem~\ref{thm:IHT} imply the main theorem.

Consider $y = F^{-1}\hat{x} + e = F^{-1}\hat{x}_{h(k)} + F^{-1}\hat{x}_{t(k)} + e$, where $\abs{\hat{x}_{t(k)}} \leq \epsilon\abs{x}$ and $e$ is $t$-sparse. Using $\beta = F^{-1}\hat{x}_{t(k)}$, we can rewrite this expression as, $y = [F^{-1} ~~ I] \begin{bmatrix}\hat{x}_{h(k)} \\ e \end{bmatrix} + \beta = M v + \beta$, where $\abs{\beta} = \abs{F^{-1}\hat{x}_{t(k)}} = \abs{\hat{x}_{t(k)}} \leq \epsilon\abs{x}$, since $F$ is orthonormal.
In Lemma~\ref{lem:model-rip} we proved that the matrix $M$ is $((3k,3t),\delta)$-RIP (with $\delta = 0.1$) for the set $\mathcal{M}_{k,t}.$

At the $i^{th}$ iteration of the IHT algorithm let $\begin{bmatrix}\hat{x}^{[i]} \\ e^{[i]} \end{bmatrix}$ be our estimate of $\begin{bmatrix}\hat{x}_{h(k)} \\ e \end{bmatrix}$. At the $T^{th}$ iteration, by Theorem~\ref{thm:IHT} we have the guarantee that,
\begin{align}
&\abs{\begin{bmatrix}\hat{x}^{[T]} \\ e^{[T]} \end{bmatrix} - \begin{bmatrix}\hat{x}_{h(k)} \\ e \end{bmatrix}} \leq 2^{-T}\cdot\sqrt{\abs{x}^2+\abs{e}^2} + 4 \abs{\beta} \approx 4\epsilon\abs{\hat{x}} \\
\implies&\abs{\hat{x}^{[T]}  - \hat{x}_k}^2 + \abs{e^{[T]}  - e}^2 \leq 16\epsilon^2\abs{\hat{x}}^2, \label{eqn:bcdh}
\end{align}

since we set $T$ such that $2^{-T}\cdot\sqrt{\abs{x}^2+\abs{e}^2}\approx 0$. Note that~\ref{eqn:bcdh} already gives a weak $L_2$-$L_2$ guarantee on $\abs{\hx^{[T]} - \hx_{h(k)}}$ but we will derive a stronger $L_\infty$-$L_2$ guarantee.

Consider the $T^{th}$ iteration of IHT, and define a vector $z := F(y - e^{[i-1]})$. The IHT algorithm sets $\hx^{[T]} = z_{h(k)}$. We have that,
\[y = F^{-1}\hx + e = F^{-1}z + e^{[i-1]} \Leftrightarrow \hx - z = F(e^{[T-1]} - e) \]

Since both $e,e^{[T-1]}$ are $t$-sparse, we have that the vector $e^{[T-1]}-e$ is $2t$-sparse. By the uncertainty principle~\ref{lem:GUP}, we get that, $\abs{\hx - z}_\infty \leq \sqrt{2t/n}\abs{e^{[T]} - e}_2 = O(\epsilon\abs{\hx}\sqrt{t/n})$ by Equation~\ref{eqn:bcdh}. This trivially implies that $\abs{\hx_{h(k)} - z_{h(k)}}_\infty = O(\epsilon\abs{\hx}\sqrt{t/n})$ which implies $\abs{\hx_{h(k)} - z_{h(k)}} = O(\epsilon\abs{\hx}\sqrt{kt/n})$.
\end{proof}

	\section{Experiments}
\label{sect:experiments}

\subsection{\texorpdfstring{$L_0$~} ~Adversarial attacks}
\label{sect:worst}
We evaluated our framework on three leading $L_0$ attacks in the literature: the JSMA Attack of Papernot et al~\cite{PapernotMJFCS15}, the $L_0$ attack from Carlini and Wagner (CW)~\cite{CarliniW16a}, and the adversarial patch from Brown et al~\cite{patch}. We evaluated Algorithm \ref{alg:iht} on the JSMA and CW attacks and present these results in this section. We discuss experiments on the adversarial patch in Section \ref{sect:patch}.

We tested both JSMA and CW on two datasets: the MNIST handwritten digits~\cite{lecun1998mnist} and the Fashion-MNIST~\cite{fashion} dataset of clothing images. For each attack, we used randomly selected targets. For both datasets we used a neural network composed of a convolutional layer (32 kernels of 3x3), max pooling layer (2x2), convolutional layer (64 kernels of 3x3), max pooling layer (2x2), fully connected layer (128 neurons) with dropout (rate $= .25$) and an output softmax layer (10 neurons). We used the Adam optimizer with cross-entropy loss and ran it for 10 epochs over the training datasets.

For each dataset, we trained our neural network only on images that were projected onto their top-$k$ 2D-DCT coefficients. Here $k$ is a parameter we tuned depending on the dataset (for MNIST $k=40$ and for Fashion-MNIST $k=35$). For each dataset, we fixed its corresponding $k$ across all experiments reported here.

In all of our evaluations there were three experimental conditions: first we ran uncorrupted images through the network to establish a baseline accuracy. Then we ran the $L_0$ adversarial examples through the network. Finally, we ran our correction algorithm on the adversarial examples and ran the results through the network. Example images of these conditions can be seen in Figure \ref{fig:conditions}.

\begin{figure}
	\begin{center}
		\includegraphics[scale=1.5]{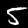}   
		\hspace{7px}
		\includegraphics[scale=1.5]{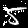}
		\hspace{7px}
		\includegraphics[scale=1.5]{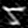}
		\hspace{20px}
		\includegraphics[scale=1.5]{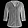}   
		\hspace{7px}
		\includegraphics[scale=1.5]{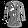}
		\hspace{7px}
		\includegraphics[scale=1.5]{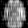}
	\end{center}
	\caption{Example experimental conditions. The left 3 images depict an original MNIST image, the image corrupted by JSMA, and our corrected image. The right three images show an original Fashion-MNIST image, the image corrupted by CW, and our corrected image.}
	\label{fig:conditions}
\end{figure}
For the JSMA, we ran an experiment for several different adversary noise budgets. For each budget, we evaluated the network on the three experimental conditions. The accuracy vs $L_0$ budget and loss vs $L_0$ budget graphs can be seen in Figure~\ref{fig:worst-images} on the MNIST and Fashion-MNIST datasets. Exact values can be found in Appendix~\ref{app:jsma-results}. The results demonstrate that our correction algorithm successfully defends against the JSMA attack. For example when the adversary corrupts 30 bits, it is able to drop the accuracy of our network on the Fashion-MNIST dataset from $88.5\%$ to $24.8\%$ but after running our recovery algorithm we get back up to $83.1\%$.

The CW attack works by finding a minimal set of pixels that can be corrupted to fool the network. This means that the adversary's budget will depend on the particular image being corrupted rather than being fixed in advance. For this reason, we let the CW adversary choose how many pixels to corrupt and allow ourselves to know its budget for each image. Note that the locations and magnitudes of the noise are unknown to us. Since the budget varies across images, a plot like Figure~\ref{fig:worst-images} does not make sense and we instead report the overall accuracy and loss of our correction algorithm in Table \ref{fig:tables}. Again our correction algorithm is effective against CW. For example on the Fashion-MNIST dataset the network's test accuracy on original images was $87.8\%$. The CW attack was successful and the network mislabeled every adversarial example. After running our correction algorithm, the accuracy returns to $85.7\%$.

\begin{figure}   
	\includegraphics[scale=.3]{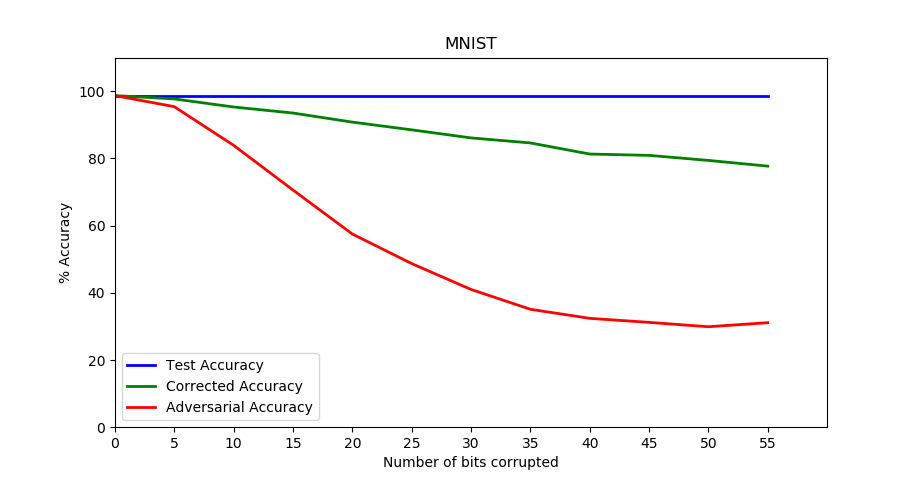}   
	\hspace{5px}
	\includegraphics[scale=.3]{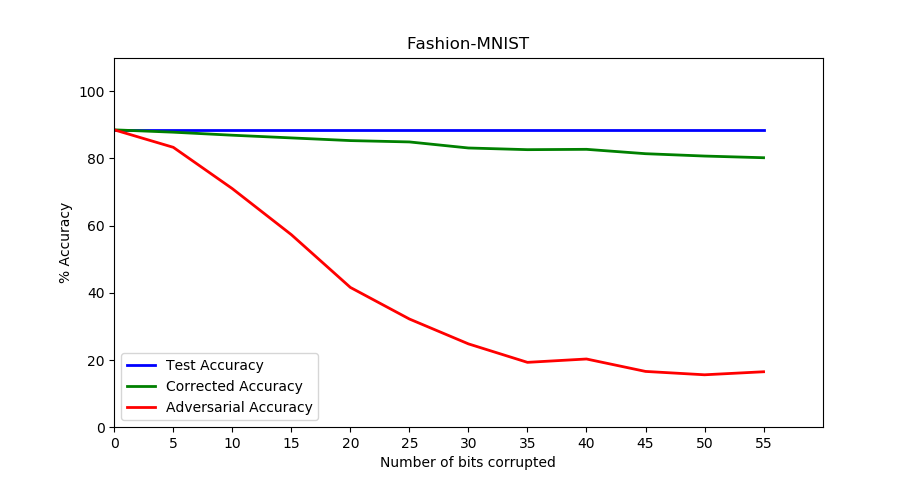}
	\hspace{5px}
	\includegraphics[scale=.3]{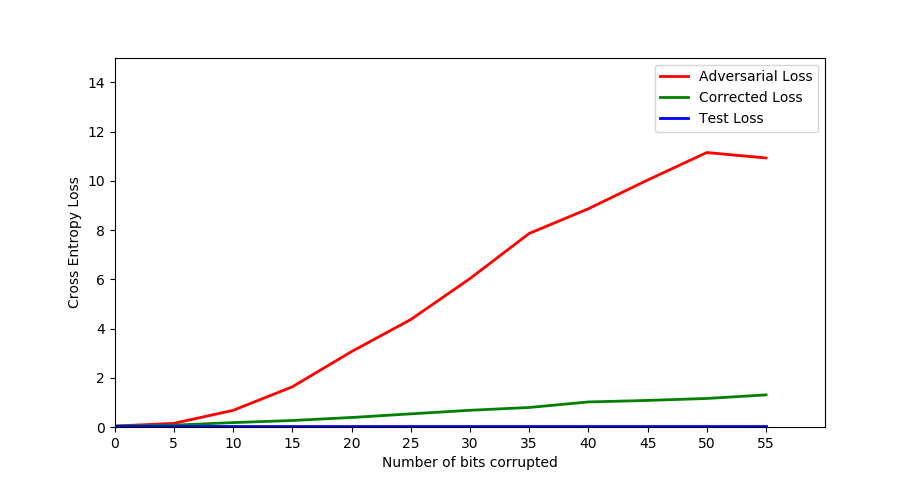}
	\hspace{5px}
	\includegraphics[scale=.3]{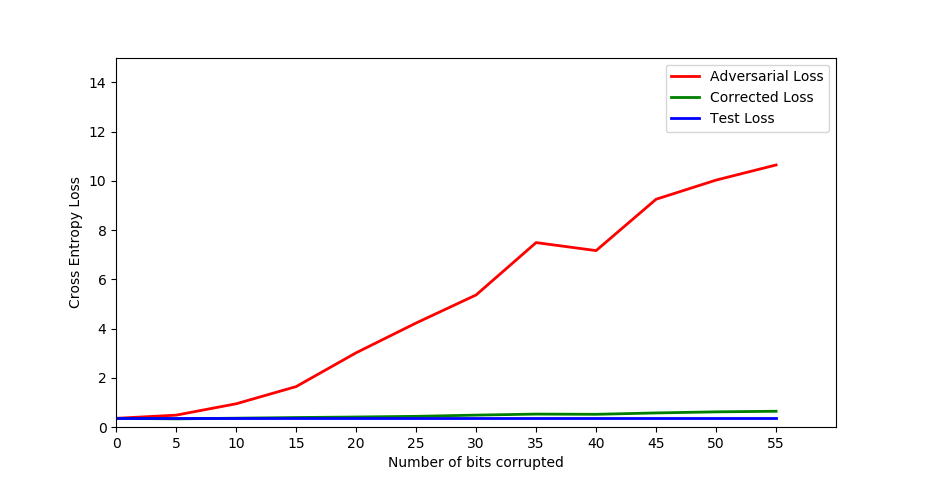}
	\caption{Classification accuracy and loss for JSMA on MNIST (left) and Fashion-MNIST (right). Blue lines show the performance of the network on original images (and hence does not change with the number of coordinates corrupted). Red lines show the performance of the network on uncorrected adversarial examples and green lines show the performance of the network on images that were corrected by Algorithm \ref{alg:iht}}
	\label{fig:worst-images}
\end{figure}

\begin{figure}[H]
	\begin{center}
		\begin{tabular}{ |c|c|c|c|c|c|c|} 
			\hline
			& MNIST & Adversarial & Corrected  & F-MNIST & Adversarial & Corrected \\ \hline
			Accuracy & 99.0 & 0.0 & 72.8  & 87.8 & 0.0 & 85.7\\ \hline
			Loss & 0.002 & 0.115 & 0.095 & 0.035 & 0.140 & 0.040\\ \hline
		\end{tabular}
	\end{center}
	\caption{Experimental results for our algorithm on the CW attack for the MNIST and Fashion-MNIST datasets. Columns 2-4 show results for MNIST data and columns 5 to 7 show Fashion-MNIST.}
	\label{fig:tables}
\end{figure}

As images grow larger they become less sparse in Fourier bases but natural images are still block-wise sparse. In such cases our algorithm could be modified to correct images block by block, in which case the network would need to be trained on images compressed block by block (e.g. as in JPEG). This would work with the mild assumption that the corrupted locations are well-distributed across blocks because then our recovery result could be applied to each block separately. Within each block the corrupted locations could still be anywhere and of any magnitude. For images that are too large to be sparse in Fourier bases, the block-wise approach may fail in the case where most of the $L_0$ noise resides in few blocks because in these blocks there will be too many corrupted coordinates to recover. In the next section we study the extreme case where all of the error is concentrated contiguously. We show that even in this extreme case our framework for $L_0$-robust sparse transformations can be used to guard against contiguous noise attacks even in large images. 

We think that further extensions to our framework can be made for large images by exploiting their blockwise sparsity. Large images may also have other kinds of sparse structure and a theoretical approach similar to ours may be able to give guarantees in this setting. We believe this is an interesting direction for future research.

\subsection{Adversarial patch}
\label{sect:patch}
In \cite{patch}, the authors introduce a method for generating adversarial patches. These are targeted attacks in the form of circular images that get overlayed on input images. They showed that their patch effectively fools leading image classifiers into mislabeling patched images.

Notice that the adversarial patch is an example of $L_0$ noise and so fits within our framework. The patch attack is only successful when the patch is sufficiently large (\textasciitilde ~$80$ pixels in diameter for $224\times224$ images), which is larger than our algorithm can tolerate. Also images of this size are less sparse in the Fourier domain and as discussed above, our approach may not be able to correct contiguous noise on such images. Similarly we cannot train the neural networks on compressed images as that would lead to non trivial loss as the images are less sparse. So in this section we use a network that was pretrained on original ImageNet images.

We are able to use the contiguity of the noise with the mild sparsity of large images by using Algorithm~\ref{alg:iht-patch} to defend against the patch attack. Since image recovery is not possible in this setting, our algorithm instead focuses on \emph{detecting} the location of the contiguous noise. We detect the noise by searching over contiguous blocks in the image and running Algorithm \ref{alg:iht} on each block, where we project $e$ only to the block rather than top-$t$ coordinates. Finally we find the block for which the remaining image ($y-e$) is sparsest in the Fourier domain. We call this Patchwise IHT and a formal description of the algorithm is given below. Note that for this particular set of adversarial examples there may be other ways to detect the patch with pre-processing. We do not do any such optimizations that are particular to the adversary and Patchwise IHT is based only on the mild sparsity of the original images.

\begin{algorithm}
	\caption{Patchwise Iterative Hard Thresholding (IHT)
		\label{alg:iht-patch}}
	\hspace*{\algorithmicindent} \textbf{Input:} Positive integers $k,t,T,$ and $\ell$. $y=x+e$, where $x\in\mathbb{R}^n$ is approximately $k$-sparse in\\ 
	\hspace*{\algorithmicindent} ~~~~~~~~~~~~ the Fourier basis and $e\in \mathbb{R}^n$ is exactly $t$-sparse in the standard basis. Fourier matrix $F$. \\
	\hspace*{\algorithmicindent} \textbf{Output:} $x'$, approximation of the original signal $x$.
	\begin{algorithmic}[1]
		\Function{IHT}{$y=F^{-1}\hat{x} + e, F, k, t, T, \ell$}
		\State{$x' \leftarrow \vec{\infty}$}
		\For{$\ell \times \ell$ patch $p$ in image} 
		\State{$\hat{x}^{[1]} \leftarrow \vec{0}$}
		\State{$e^{[1]} \leftarrow \vec{0}$}
		\For{$i=1\cdots T$} 
		\State{$\hat{x}^{[i+1]} \leftarrow (F(y-e^{[i]}))_{h(k)}$ }
		\State{$e^{[i+1]} \leftarrow (y-F^{-1}\hat{x}^{[i]})_{h(t)}$}
		\EndFor
		\If{$\abs{\hat{x}^{[T+1]}} < \abs{x'}$}
		\State{$x' = F^{-1}\hat{x}^{[T+1]}$}
		\EndIf
		\EndFor
		\State{\bf return} $x'$
		\EndFunction
	\end{algorithmic}
\end{algorithm}

We took 700 random images from ImageNet and for classification we used pretrained ResNet-50 network \cite{resnet}. We ran each image through the network in our three experimental conditions, depicted in Figure \ref{fig:images}.

\begin{figure}
	\begin{center}
		\includegraphics[scale=.4]{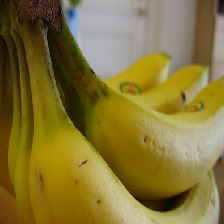}   
		\hspace{10px}
		\includegraphics[scale=.4]{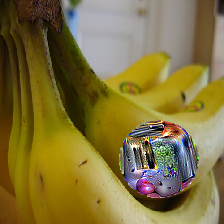}
		\hspace{10px}
		\includegraphics[scale=.4]{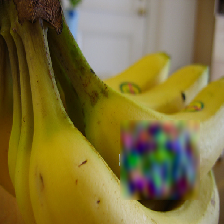}
	\end{center}
	\caption{Example of the three image conditions in the patch experiment. Left is the original image classified as `banana' with probability $.94$. The middle, with the adversarial patch overlayed, is classified as `toaster' with probability $.93$. The right is the image after our Patchwise IHT algorithm, which gets classified as `banana' with probability $.94$.}
	\label{fig:images}
\end{figure}

Figure \ref{fig:patchresults} shows the results of our experiment. The patch was a successful attack (Top-5 accuracy dropped from $92.3\%$ to $63.9\%$ and Top-1 from $76.4\%$ to $12.0\%$). After correcting, Top-5 accuracy jumped to $80.4\%$ (Top-1: $59.7\%$). Only $1.0\%$ of the original images were labeled as `toaster' (none in the Top-1), but `toaster' was in the Top-5 in $99.0\%$ of the patched images with $85.7\%$ being the most confident label. Notably, very few corrected images were labeled as `toaster' (Top-5: $7.4\%$, Top-1: $4.7\%$).

\begin{figure}[htb]
	\centering
	\includegraphics[scale=.3]{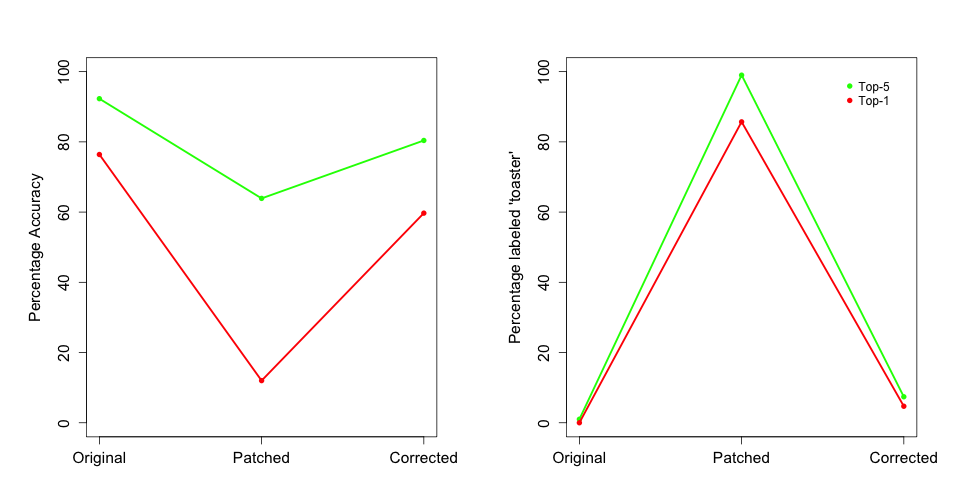}
	\caption{Experimental results for Algorithm~\ref{alg:iht-patch}. The left plot depicts the accuracy of the network in our three experimental conditions. The right plot shows the percentage of images labeled as `toaster' under the same three conditions.}
	\label{fig:patchresults}
\end{figure}

	\section{Acknowledgements} Mitali Bafna was supported by NSF Grant CCF 1715187. Jack Murtagh was supported by NSF grant CNS-1565387. Nikhil Vyas was supported by an Akamai Presidential Fellowship and NSF Grant CCF-1552651. We would like to thank Yaron Singer and Adam Breuer for helpful feedback and encouragement in the early stages of this work. We also want to thank Thibaut Horel for valuable comments on the manuscript. Thanks also to the reviewers for helpful remarks.
	
	\newpage
	
	\bibliographystyle{alpha}
	\bibliography{neurips_2018}
	\appendix
\newpage

\section{JSMA experimental results}\label{app:jsma-results}
\label{app:jsma}
\begin{figure}[H]
\begin{center}
\begin{tabularx}{\textwidth}{|l|X|X|X|X|X|X|} 
 \hline

 MNIST Accuracy &0&5&10&15&20&25\\ \hline
 Original Images & 98.7 & 98.7 & 98.7 & 98.7 & 98.7 & 98.7 \\ \hline
Adversarial Images & 98.7 & 95.4 & 83.9 & 70.6 & 57.5 & 48.7 \\ \hline
Corrected Images & 98.7 & 97.7 & 95.3 & 93.5 & 90.8 & 88.5 \\
 \hline
 \noalign{\vskip 7mm} 
 \hline
 MNIST Accuracy&30&35&40&45&50&55\\ \hline
 Original Images  & 98.7 & 98.7 & 98.7 & 98.7 & 98.7 & 98.7 \\ \hline
Adversarial Images  & 41.0 & 35.1 & 32.4 & 31.2 & 29.9 & 31.1 \\ \hline
Corrected Images  & 86.1 & 84.6 & 81.3 & 80.9 & 79.4 & 77.7 \\
 \hline
 \noalign{\vskip 7mm} 
 \hline
 MNIST Loss &0  &5  &10  &15  &20  &25 \\ \hline
Original Images & 0.05 & 0.05 & 0.05 & 0.05 & 0.05 & 0.05 \\ \hline
Adversarial Images & 0.05 & 0.16 & 0.68 & 1.64 & 3.07 & 4.37 \\ \hline
Corrected Images & 0.05 & 0.08 & 0.19 & 0.27 & 0.39 & 0.54 \\
 \hline
 \noalign{\vskip 7mm}
 \hline
  MNIST Loss &30&35&40&45&50&55\\ \hline
Original Images & 0.05 & 0.05 & 0.05 & 0.05 & 0.05 & 0.05 \\ \hline
Adversarial Images & 6.03 & 7.86 & 8.86 & 10.03 & 11.15 & 10.92 \\ \hline
Corrected Images & 0.69 & 0.80 & 1.02 & 1.09 & 1.17 & 1.31 \\
 \hline
 \end{tabularx}
\end{center}
\caption{Experimental results for the JSMA attack on MNIST. The columns represent the adversary's budget: the number of pixels corrupted from 0 to 55 in increments of 5, given in the first row. The top two tables show accuracy results on the MNIST for the three experimental conditions. The bottom two tables show the cross-entropy loss across the the conditions and budgets.}
\end{figure}
\begin{figure}[H]
\begin{center}
\begin{tabularx}{\textwidth}{|l|X|X|X|X|X|X|} 
 \hline
 F-MNIST Accuracy &0&  5&  10&  15&  20&  25\\ \hline
Original Images & 88.5 & 88.5 & 88.5 & 88.5 & 88.5 & 88.5 \\ \hline
Adversarial Images & 88.5 & 83.3 & 71.0 & 57.3 & 41.6 & 32.2 \\ \hline
Corrected Images & 88.5 & 87.8 & 86.9 & 86.1 & 85.3 & 84.9 \\
 \hline
 \noalign{\vskip 7mm}
 \hline
  F-MNIST Accuracy &30&35&40&45&50&55\\ \hline
Original Images & 88.5 & 88.5 & 88.5 & 88.5 & 88.5 & 88.5 \\ \hline
Adversarial Images & 24.8 & 19.3 & 20.3 & 16.6 & 15.6 & 16.5 \\ \hline
Corrected Images & 83.1 & 82.6 & 82.7 & 81.4 & 80.7 & 80.2 \\
  \hline
  \noalign{\vskip 7mm}
  \hline
 F-MNIST Loss &0  &5  &10  &15  &20  &25\\ \hline
Original Images  & 0.36 & 0.36 & 0.36 & 0.36 & 0.36 & 0.36 \\ \hline
Adversarial Images & 0.36 & 0.49 & 0.95 & 1.65 & 3.02 & 4.23 \\ \hline
Corrected Images & 0.36 & 0.34 & 0.37 & 0.39 & 0.41 & 0.44 \\
 \hline
 \noalign{\vskip 7mm}
 \hline
  F-MNIST Loss &30 &35 &40&45&50&55\\ \hline
Original Images & 0.36 & 0.36 & 0.36 & 0.36 & 0.36 & 0.36 \\ \hline
Adversarial Images & 5.37 & 7.49 & 7.17 & 9.25 & 10.03 & 10.64 \\ \hline
Corrected Images & 0.49 & 0.53 & 0.52 & 0.58 & 0.62 & 0.65 \\
\hline
\end{tabularx}
\end{center}
\caption{Experimental results for the JSMA attack on Fashion-MNIST. The columns represent the adversary's budget: the number of pixels corrupted from 0 to 55 in increments of 5, given in the first row. The top two tables show accuracy results on the Fashion-MNIST for the three experimental conditions. The bottom two tables show the cross-entropy loss across the the conditions and budgets.}
\end{figure}
\end{document}